\newtheorem{theorem}{Theorem}
\newcommand{\R}{\mathbb{R}}
\begin{document}

\title{Fair-Capacitated Clustering}

\numberofauthors{4} %  in this sample file, there are a *total*
% % of EIGHT authors. SIX appear on the 'first-page' (for formatting
% % reasons) and the remaining two appear in the \additionalauthors section.
% %
 \author{
% % You can go ahead and credit any number of authors here,
% % e.g. one 'row of three' or two rows (consisting of one row of three
% % and a second row of one, two or three).
% %
% % The command \alignauthor (no curly braces needed) should
% % precede each author name, affiliation/snail-mail address and
% % e-mail address. Additionally, tag each line of
% % affiliation/address with \affaddr, and tag the
% % e-mail address with \email.
% %
% % 1st. author
 \alignauthor Tai Le Quy \\
        \affaddr{Leibniz University Hannover}\\
        \affaddr{Hannover, Germany}\\
        \email{tai@l3s.de}
% % 2nd. author
\alignauthor Arjun Roy\\
        \affaddr{Leibniz University Hannover}\\
        \affaddr{Hannover, Germany}\\
        \email{roy@l3s.de}
% % 3rd. author
%\and
\alignauthor Gunnar Friege\\
        \affaddr{Leibniz University Hannover}\\
        \affaddr{Hannover, Germany}\\
        \email{friege@idmp.uni-hannover.de}
\and
% % 4th. author
\author \ Eirini Ntoutsi\\
        \affaddr{Free University of Berlin}\\
        \affaddr{Berlin, Germany}\\
        \email{eirini.ntoutsi@fu-berlin.de}
% % 5th. author
% \alignauthor Sean Fogarty\\
%        \affaddr{NASA Ames Research Center}\\
%        \affaddr{Moffett Field}\\
%        \affaddr{California 94035}\\
%        \email{fogartys@amesres.org}
% % 6th. author
% \alignauthor Charles Palmer\\
%        \affaddr{Palmer Research Laboratories}\\
%        \affaddr{8600 Datapoint Drive}\\
%        \affaddr{San Antonio, Texas 78229}\\
%        \email{cpalmer@prl.com}
 }
% % There's nothing stopping you putting the seventh, eighth, etc.
% % author on the opening page (as the 'third row') but we ask,
% % for aesthetic reasons that you place these 'additional authors'
% % in the \additional authors block, viz.
% \additionalauthors{Additional authors: John Smith (The Th{\o}rv{\"a}ld Group,
% email: {\texttt{jsmith@affiliation.org}}) and Julius P.~Kumquat
% (The Kumquat Consortium, email: {\texttt{jpkumquat@consortium.net}}).}
% \date{30 July 1999}
% Just remember to make sure that the TOTAL number of authors
% is the number that will appear on the first page PLUS the
% number that will appear in the \additionalauthors section.

\maketitle

%\onecolumn

\begin{abstract}
%The abstract should briefly summarize the contents of the paper in 15--250 words.
Traditionally, clustering algorithms focus on partitioning the data into groups of similar instances.
The similarity objective, however, is not sufficient in applications where a \emph{fair-representation} of the groups in terms of protected attributes like gender or race, is required for each cluster. %Our motivation comes from the education domain where studies indicate that students might learn better in mixed-gender groups~\footnote{https://education.wisc.edu/news/study-indicates-students-may-learn-better-in-mixed-gender-groups/}. 
%We model the diversity objective using fairness notions and in particular the fairlet, dissimilarity and balance. 
Moreover, in many applications, to make the clusters useful for the end-user, a \emph{balanced cardinality}
among the clusters is required. Our motivation comes from the education domain where studies indicate that students might learn better in diverse student groups and of course groups of similar cardinality are more practical e.g., for group assignments.
%Based on the objectives of similarity, fairness and cardinality we define the fair-capacitated clustering problem.
%Fairness-aware in clustering methods is receiving increasing attention in the machine learning community. In fact, there are many applications, such as in education that require clustering algorithms to take into account the size of the clusters. 
%Therefore, the required solutions need to satisfy simultaneously the fairness, cardinality and quality of clustering. 
To this end, we introduce the \emph{fair-capacitated clustering problem} that partitions the data into clusters of similar instances while ensuring cluster fairness 
%in terms of the protected attribute 
and balancing cluster cardinalities. 
We propose a two-step solution to the problem: i) we rely on fairlets to generate minimal sets that satisfy the fair constraint and ii) we propose two approaches, namely hierarchical clustering and partitioning-based clustering, to obtain the fair-capacitated clustering. The hierarchical approach embeds the additional cardinality requirements during the merging step while the partitioning-based one alters the assignment step using a knapsack problem formulation to satisfy the additional requirements.
%We propose two solutions to the problem: i) a hierarchical approach that embeds the additional fairness and cardinality requirements during the merge step of the algorithm and ii) a partitioning approach that alters the assignment step, using a knapsack problem formulation, to satisfy the additional requirements.
Our experiments on four educational datasets show that our approaches deliver well-balanced clusters in terms of both fairness and cardinality while maintaining a good clustering quality.
\end{abstract}

%% A category with the (minimum) three required fields
%\category{H.4}{Information Systems Applications}{Miscellaneous}
%%A category including the fourth, optional field follows...
%\category{D.2.8}{Software Engineering}{Metrics}[complexity measures, performance measures]
%
%\terms{Theory}

\keywords{fair-capacitated clustering, fair clustering, capacitated clustering, fairness, learning analytics, fairlets, knapsack.} % NOT required for Proceedings

%Content
\section{Introduction}
\label{sec:introduction}
Machine learning (ML) plays a crucial role in decision-making in almost all areas of our lives, including areas of high societal impact, like healthcare and education. Our work's motivation comes from the education domain where ML-based decision-making has been used in a wide variety of tasks from student dropout prediction \cite{gardner2019evaluating}, forecasting on-time graduation of students \cite{hutt2019evaluating} to education admission decisions \cite{marcinkowski2020implications}. Recently, the issue of bias and discrimination in ML-based decision-making systems is receiving a lot of attention~\cite{ntoutsi2020bias} as there are many recorded incidents of discrimination (e.g., recidivism prediction~\cite{larson2016we}, grades prediction  \cite{bhopal2020impact,hubble2020level})
%\todo[inline]{we could add another such incident, maybe coming from education? I think there was something about the UK education and covid last year}) 
caused by such systems against individuals or groups or people on the basis of \emph{protected attributes} like gender, race etc. Bias in education is not a new problem, rather there is already a long literature on different sources of bias in education~\cite{meaney2019early} or students' data analysis \cite{bharara2018application} as well as studies on racial bias~\cite{warikoo2016examining} and %https://journals.sagepub.com/doi/full/10.3102/0013189X16683408?casa_token=Yy5zL1YC8eIAAAAA%3A6eAPMEJCwwbG_Di8Aq-cQrUa8rtM3YXvI6TYgw490M-p2uU2zpxEbSUKy47bXPMQQjT-bm307PFn}, 
gender bias~\cite{masterson2012empirical}. %{https://www.sciencedirect.com/science/article/abs/pii/S0305750X1100177X}
 However, ML-based decision-making systems have the potential to amplify prevalent biases or create new ones and therefore, fairness-aware ML approaches are required also for the educational domain.

In this work, we focus on fairness in clustering, i.e., the balance of members in cluster w.r.t protected attributes, as in educational activities, group assignments \cite{ford2003fair} and student team achievement divisions  \cite{tiantong2013student} are important tools in teaching and learning tasks to help students working together towards shared learning goals. 
%In pedagogical contexts, the assignment to groups (clusters) can be left to the learners or can be done by the teacher or project manager. 
%In the latter case, typical criteria are interests, prior knowledge,  problem-solving ability, and also age and communication skills.
Better communication, higher-order thinking, conflict management are several examples of the advantages of group assignments \cite{ford2003fair}. 
Clustering algorithms are effective solutions for partitioning students into groups of \emph{similar instances} \cite{bharara2018application,navarro2018comparison}. Traditional algorithms, however, focus solely on the similarity objective and do not consider the fairness of the resulting clusters w.r.t. protected attributes like gender or race. However, studies indicate that students might learn better in diverse student groups, e.g., mixed-gender groups~\cite{gnesdilow2013group,zhan2015effects}.
%\footnote{https://education.wisc.edu/news/study-indicates-students-may-learn-better-in-mixed-gender-groups/}. 
Lately, fair-clustering solutions have been proposed, e.g.,~\cite{chierichetti2017fair}, which aim to discover clusters with a fair representation regarding some protected attributes. 
%We model the diversity objective using fairness notions and in particular the fairlet, dissimilarity and balance.

%Fair algorithms based on $k$-center, $k$-median and $k$-means have been introduced in the recent researches \cite{chierichetti2017fair} \cite{bera2019fair} \cite{kleindessner2019fair}. 

%\todo{I suggest it is better to draw the capacity motivation from the education problem discussed above}

In a teaching situation, one is often interested in certain group sizes which are usually between 2--4 students per group in primary, secondary and university education but might be much larger in adult education and Massive Open Online Courses (MOOCs). 
%This can be the group work phase in school lessons. 
It is obvious that the size of the groups should be comparable to allow for a fair allocation of work among the students. Again, traditional clustering algorithms do not consider this requirement, and as a result, clusters of varying sizes might be extracted, reducing the usefulness and applicability of the partitioning for the end-user/teacher. This leads to the demand for clustering solutions that also take into account the size of the clusters. 
The problem is known as \emph{the capacitated clustering problem (CCP)} \cite{mulvey1984solving}, which aims to extract clusters with a limited capacity\footnote{We use the terms cluster capacity, cluster size and cluster cardinality interchangeably to refer to the number of instances in a cluster.} while minimizing the total dissimilarity in the clusters.
%The problem is known as the capacitated clustering problem (CCP)~\cite{mulvey1984solving} which aims at extracting a set of clusters with limited size~\footnote{We use the terms cluster capacity and cluster size interchangeably to refer to the number of instances in a cluster.} and minimizes the total dissimilarity in the clusters. 
Capacitated clustering is useful in many applications, e.g., transferring goods/services from the service providers (post office, stores, etc.), garbage collection and salesforce territorial design \cite{negreiros2006capacitated}  to various customer locations \cite{geetha2009improved}. 
To the best of our knowledge, no solution exists that considers both fairness and capacity of clusters on top of the similarity objective.

To this end, we propose a new problem, the so-called \emph{fair-capacitated clustering} that ensures fairness and balanced cardinalities of the resulting clusters. We decompose the problem into two subproblems: i) the fairness-requirement compliance step that preserves fairness at a minimum threshold of balance score
%\todo[inline]{not clear what at a minimum threshold here means} 
and ii) the capacity-requirement compliance step that ensures clusters of comparable sizes. 
For the first step, we generate fairlets~\cite{chierichetti2017fair}, which are minimal sets that satisfy fair representation w.r.t. a protected attribute while approximately preserving the clustering objective. For the second step, we propose two solutions for two different clustering types, namely hierarchical and partitioning-based clustering, that consider the capacity constraint during the merge step (for the hierarchical approach) and  during the assignment step (for the partitioning approach). Experimental results, on four real datasets from the education domain, show that our methods result in fair and capacitated clusters while preserving the clustering quality.
%iii) evaluation the quality, fairness and cardinality of clustering on four educational datasets. %As our experiments with four different datasets about students in online and offline learning courses, such a clustering method is beneficial for clustering students into groups whilst a good clustering quality and the fairness of each cluster are maintained.

The rest of our paper is structured as follows: Section \ref{sec:related} overviews the related work. The fair-capacitated clustering problem is introduced in Section \ref{sec:problem_definition}. Section \ref{sec:methods} describes our proposed approaches and section \ref{sec:experiment} presents the detail of experimental evaluation on educational datasets. Finally, the conclusion and outlook are summarized in Section \ref{sec:conclusion}.

\section{Related work}
\label{sec:related}
Chierichetti et al. \cite{chierichetti2017fair} introduced the fair clustering problem with the aim to ensure equal representation for each protected attribute, such as gender, in every cluster. In their formulation, each instance is assigned with one of two colors (red, blue). They proposed a two-phase approach: clustering all instances into fairlets - small clusters preserving the fairness measure, and then applying vanilla clustering methods ($k$-Center, $k$-Median) on those fairlets. 
%Experimental results show that their method can maintain the fairness of clusters.
%; however, finding the fairlet decomposition may introduce a computational bottleneck.
Subsequent studies focus on generalization and scalability. Backurs et al. \cite{backurs2019scalable} presented an approximate fairlet decomposition algorithm which can formulate the fairlets in nearly linear time thus tackling the efficiency bottleneck of~\cite{chierichetti2017fair}.
Rösner and Schmidt \cite{rosner2018privacy} generalize the fair clustering problem to more than two protected attributes. A more generalized and tunable notion of fairness for clustering was introduced in Bera et al. \cite{bera2019fair}. 
%They did the experiments on five datasets from UCI repository with three clustering methods: $k$-center, $k$-median and $k$-mean. 
%Ziko el al. \cite{jing2019clustering} proposed a new formulation of fairness which can control the trade-off between the fairness and clustering terms by an auxiliary function (tight upper bound). 
%Chen et al. \cite{chen2019proportionally} proposed a new definition of fairness as proportionality. Accordingly to their theory, to cluster $n$ points with $k$ centers, any $n/k$ points could be formed their own cluster. 
Anshuman and Prasant \cite{chhabra2020fair} introduced a fair hierarchical agglomerative clustering method for multiple protected attributes.

The capacitated clustering problem (CCP), a combinatorial optimization problem, was first introduced by Mulvey and Beck~\cite{mulvey1984solving} who proposed solutions using heuristic and subgradient algorithms. %\todo{One algorithm or many?}.
%The CCP is NP-complete and combinatorial optimization problem \todo{please rephrase, english is not good} \cite{geetha2009improved}. 
Several approaches exist to improve the efficiency of solutions or CCP approaches for different cluster types. Khuller and Sussmann \cite{khuller2000capacitated}, for example, introduced an approximation algorithm 
%(approximation factors of 5 and 6) 
for the capacitated $k$-Center problem. Geetha et al. \cite{geetha2009improved} improved $k$-Means algorithm for CCP by using a priority measure to assign points to their centroid.
%\todo{what sort of improvement? efficciency/quality/both?}of . %Li \cite{li2014improved} showed a $(6+10\alpha)$-approximation algorithm for the hard uniform capacitated k-median problem. 
Lam and Mittenthal \cite{lam2013capacitated} proposed a heuristic hierarchical clustering method for CCP to solve the multi-depot location-routing problem.
%\todo{Not clear why we explain so many different ccp methods when we do not use them anywhere.}

%Fair clustering is related \todo{why/how it is related?}to capacitated clustering problem (CCP), in which, the assignment is not implied by the set of centers opened\todo{what opened means? please rephrase} \cite{bera2019fair}. 
%Mulvey and Beck pioneered the problem of capacitated clustering \cite{mulvey1984solving}. They proposed a model to resolve the capacitated clustering problem by using heuristic and subgradient algorithm. 

In this work, we introduce the problem of fair-capacitated clustering which builds upon notions from fair clustering and capacitated clustering. In particular, we build upon the notion of fairlets~\cite{chierichetti2017fair} to extract the minimal sets that preserve fairness. Regarding the CCP we follow the formulation of~\cite{mulvey1984solving} to ensure balanced cluster cardinalities. To the best of our knowledge, the combined problem has not been studied before and as already discussed, comprises a useful tool in many domains like education.

\section{Problem definition}
\label{sec:problem_definition}
Let $X \in \R^n$ be a set of instances to be clustered and let $d(): X \times X \rightarrow \R$ be the distance function.
For an integer $k$ we use $[k]$ to denote the set $\{1,2,...,k\}$.  
A \textit{k-clustering} $\mathcal{C}$ is a partition of $X$ into $k$ disjoint subsets, $\mathcal{C}=\{C_1, C_2,...,C_k\}$, called \textit{clusters} with $S=\{s_1, s_2,...,s_k\}$ be the corresponding cluster centers. 
The goal of clustering is to find an \textit{assignment}\footnote{We focus on hard clustering where each instance is 
only assigned to one cluster.} $\phi: X \rightarrow 
%\mathcal{C}$
[k]$ 
that minimizes the  objective function:
%----- Objective function in paper: Chierichetti 2017 ---%
\begin{equation}
\label{eq:eval}
    \mathcal{L}(X,\mathcal{C}) = \sum_{s_i \in S} \sum_{x \in C_i} d(x,s_i)
    %\mathcal{L}(X,\mathcal{C}) = \max_{C_i \in \mathcal{C}} \min_{s_j \in S} \max_{x \in C_i}d(x,s_j)
\end{equation}
%The quality of a clustering $\mathcal{C}$ can be evaluated by the objective function in the Eq. \ref{eq:eval}. 
As shown in Eq. \ref{eq:eval}, the goal is to find an assignment that minimizes  the sum of distances between each point $x\in X$and its corresponding cluster center $s_i \in S$. It is clear that such an assignment optimizes for similarity but does not consider fairness or capacity of the resulting clusters.
%is the dissimilarity measure (distance) between point $x$ and its center $s_i$, $\forall x \in C_i$. 
%%The members of cluster $C_i$ are determined by $C_i = \{x \in X \mid \phi(x) = C_i\}$.
%----- Objective function original ----%
% $\mathcal{L}(X,\mathcal{C}) =\sum_{i=0}^{k}\sum_{x_j\in C_i}d(x_j,S_i)$.
% http://resources.qiagenbioinformatics.com/manuals/clcgenomicsworkbench/650/K_means_medoids_clustering.html

% Capacitated clustering
%\begin{definition}
\textbf{Capacitated clustering:}
%\todo[inline]{Please cite the paper introducting this problem}
The goal of capacitated clustering \cite{mulvey1984solving} is to discover clusters of given capacities while still minimizing the distance objective  $\mathcal{L}(X,\mathcal{C})$. The capacity constraint is defined as an upper bound $Q_i$ on the cardinality of each cluster $C_i$:
%\begin{equation}
%\label{eq:ccp1}
%    \mathcal{M}(X,\mathcal{C}) = \sum_{s \in S} \sum_{x \in C} d(x,s)
%\end{equation}
\begin{equation}
    |C_i| \leq Q_i
\end{equation}
%\end{definition}
% Fairness definition
\textbf{Clustering fairness: }
We assume the existence of a binary protected attribute $P=\{0,1\}$, e.g.,  gender=\{``male", ``female"\}. Let $\psi: X \rightarrow P$ denotes the demographic group to which the point belongs, i.e., male or female. %Then the notion of fairness for a cluster $C_i$ and the whole clustering $C$ are defined as below.

Fairness of a cluster is evaluated in terms of the balance score~\cite{chierichetti2017fair} as the minimum ratio between two groups.
\begin{equation}
\label{eq:cluster_balance}
    \resizebox{1.0\hsize}{!}{$balance(C_i)=\min\left(\frac{|\{x \in C_i \mid \psi(x)=0\}|}{|\{x \in C_i \mid \psi(x)=1\}|},
    \frac{|\{x \in C_i \mid \psi(x)=1\}|}{|\{x \in C_i \mid \psi(x)=0\}|}\right)$}
\end{equation}
Fairness of a clustering $\mathcal{C}$ equals to the balance of the least balanced cluster $C_i \in \mathcal{C}$.
%\begin{definition} %(Clustering fairness)  %The fairness in terms %of the cluster-balance measure of a cluster $C_i$, $i \in [k]$ is computed by a balance score:
%The fairness of a clustering $\mathcal{C}$ is determined as:
\begin{equation}
\label{eq:clustering_balance}
    balance(\mathcal{C}) = \min_{C_i \in C} balance(C_i)
\end{equation}%\todo{maintain consistency in the formulation. Use $C_i$ everywhere.}
%\end{definition}
%The next definition formalizes the fair-capacitated clustering problem which is the main focus of this paper.
We now introduce the problem of fair-capacitated clustering that combines all aforementioned objectives regarding distance, fairness and capacity.

\begin{definition} (Fair-capacitated clustering problem)\\
We define the problem of $(t,k,q)$-fair-capacitated clustering as finding a clustering $\mathcal{C}=\{C_1, \cdots C_k\}$ that partitions the data $X$ into $|\mathcal{C}|=k$ clusters such that the cardinality of each cluster $C_i \in \mathcal{C}$ does not exceed a threshold $q$, i.e., $|C_i|\leq q$ (\emph{the capacity constraint}), the balance of each cluster is at least  $t$, i.e.,  $balance(\mathcal{C})\geq t$ (\emph{the fairness constraint}), and minimizes \emph{the objective function} $\mathcal{L}(X,\mathcal{C})$.
Parameters $k, t, q$ are user defined  referring to the number of clusters, minimum balance threshold and maximum cluster capacity, respectively. 
%and  $\mathcal{M}(X,\mathcal{C})$ . 
\end{definition}

\section{Fair-capacitated clustering}
\label{sec:methods}
In this section, we propose two $(t,k,q)$ fair-capacitated clustering approaches, one for hierarchical clustering and the second for partitioning-based clustering.
For both cases, we decompose the complex problem into two simpler subproblems: i) the fairlet decomposition step that divides the original points into set of points, the so-called fairlets, each preserving a balance score subject to the balance threshold $t$ (Section~\ref{subsec:fairlets}) and ii) the final clustering step that clusters these fairlets into $k$ final clusters so that the cardinality constraint subject to the cardinality threshold $q$ is met. Step (ii) depends on the clustering type: for hierarchical clustering, the merge step needs to be changed (Section~\ref{subsec:hierarchical}), whereas for the partitioning-based clustering the assignment step needs to change (Section~\ref{subsec:kmedoid_knapsack}).

\subsection{Fairlet decomposition}
\label{subsec:fairlets}
Traditionally, the vanilla versions of clustering algorithms are not capable of ensuring fairness because they assign the data points to the closest center without the fairness consideration. Hence, if we could divide the original data set into subsets such that each of them satisfies the balance threshold $t$
%\todoin{what is impartiality? I think it was never mentioned before. please introduce it or change it. otherwise, brings confusion to the reader}, 
then grouping these subsets to generate the final clustering would still preserve the fairness constraint. Each fair subset
%\todoin{what is equity subset? I think it was never mentioned before. please introduce it or change it. otherwise, brings confusion to the reader. Best stick to the same terminology through the paper} 
is defined as a fairlet.
%\todoin{add a small intro, what is the purpose of defining fairlets, why we cannot work with the original data}
We follow the definition of fairlet decomposition by~\cite{chierichetti2017fair}.
\begin{definition}\label{def:fairlet} (Fairlet decomposition) \\Suppose that $balance(X) \geq t$ with $t = f/m$ for some integers $1\leq f\leq m$,
such that the greatest common divisor $gcd(f,m) = 1$. A decomposition $\mathcal{F} =\{F_1, F_2, ..., F_l$\} of $X$ is a fairlet decomposition if: i) each point $x \in X$ belongs to exactly one fairlet $F_j \in \mathcal{F}$, ii) $|F_j|\leq f+m$ for each $F_j \in \mathcal{F}$, i.e., the size of each fairlet is small, and iii) for each $F_j \in \mathcal{F}$, $balance(F_j) \geq t$, i.e., the balance of each fairlet satisfies the threshold $t$. Each $F_j$ is called a fairlet.
%balance(\mathcal{F})=f/m=balance(X)$, 
\end{definition}
%\todoin{I dont see any of the t,k,q parameters from the problem definition. please connect to the problem definition or explain how these params relate and how should they be set}

%\todoin{what is gcd()? we should not call D a clustering as it creates confusion with our final clustering. We suggest we use a different notation as well e.g., F or something. This also affects sections 4.2 and 4.3 after}
%In which, agglomerative hierarchical is natural approach for the capacitated problem where we can easily control the cardinality of clusters in the merging step. Besides, applying a knapsack solution in the $k$-Medoids algorithm can help this traditional method deal with the capacitated problem.
%\todo{Why two approaches?can we say something in the line of advantages over each other and diversity}

%Now, our target is to establish a $(t,k,q)$-fair-capacitated clustering of the original dataset $X$. 
By applying fairlet decomposition on the original dataset $X$, we obtain a set of fairlets $\mathcal{F} = \{F_1, F_2, \ldots, F_l\}$. For each fairlet $F_j$ we select randomly a point $r_j \in F_j$ as the \textit{center}. For a point $x \in X$, we denote $\gamma: X \rightarrow [1,l]$ as the index of the mapped fairlet. 

The second step, is to cluster the set of fairlets $\mathcal{F} = \{F_1, F_2, $ $ \ldots, F_l\}$ into $k$ final clusters, subject to the cardinality constraint. The clustering process is described below for the hierarchical clustering type (Section~\ref{subsec:hierarchical}) and for the partitioning-based clustering type (Section~\ref{subsec:kmedoid_knapsack}).
Clustering results in an assignment from fairless to final clusters: 
$\delta: \mathcal{F} \rightarrow [k]$. The final fair-capacitated clustering $\mathcal{C}$ can be determined by the overall assignment function
$\phi(x)= \delta(F_{\gamma(x)})$, where $\gamma(x)$ returns the index of the fairlet to which $x$ is mapped.
%\todoin{Not sure what $r_{\gamma}$ is only $\gamma$ was introduced for the fairlets-to-clusters mapping}

\subsection{Fair-capacitated  hierarchical clustering}
\label{subsec:hierarchical}
Given the set of fairlets: $\mathcal{F} = \{F_1, F_2, \ldots, F_l\}$, let $W = \{w_1, w_2, \ldots, w_l\}$ be their corresponding weights, where the weight $w_j$ of a fairlet $F_j$ is defined as its  cardinality, i.e., number of points in $F_j$. 

Traditional agglomerative clustering approaches merge the two closest clusters, so rely solely on similarity. We extend the merge step by also ensuring that merging does not violate the cardinality constraint w.r.t. the cardinality threshold $q$. 

\begin{theorem}\label{theo: bal}
The balance score of a cluster formed by the union of two or more fairlets, is at least $t$.
\begin{equation*}
    balance(\mathcal{Y})\geq t,~\text{where}~\mathcal{Y}=\cup_{j\le l}F_j  ~\text{and}~ balance(F_j) \geq t
\end{equation*}
\end{theorem}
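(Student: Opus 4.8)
The plan is to reduce the ratio-based balance condition to a pair of additive linear inequalities, apply them to each fairlet, and then simply sum.

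First I would fix counting notation: for any $S \subseteq X$ let $a(S)=|\{x\in S\mid \psi(x)=0\}|$ and $b(S)=|\{x\in S\mid \psi(x)=1\}|$, so that Eq.~\ref{eq:cluster_balance} reads $balance(S)=\min(a(S)/b(S),\,b(S)/a(S))$. Since $t=f/m$ with $1\le f\le m$ we have $0<t\le 1$, and $balance(F_j)\ge t>0$ forces $a(F_j)\ge 1$ and $b(F_j)\ge 1$ for every fairlet, so that no denominator appearing below vanishes.

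The key step is the reformulation
\[
balance(S)\ge t \quad\Longleftrightarrow\quad a(S)\ge t\,b(S)\ \text{ and }\ b(S)\ge t\,a(S),
\]
which holds because $\min(u,v)\ge t$ is equivalent to $u\ge t$ and $v\ge t$, and because both denominators are positive. This is the heart of the argument: it trades a quotient of sums, which is not additive, for linear inequalities, which are. Next I would exploit disjointness. By property (i) of Definition~\ref{def:fairlet} the fairlets partition $X$, hence the fairlets composing $\mathcal{Y}$ are pairwise disjoint and the counts add: $a(\mathcal{Y})=\sum_j a(F_j)$ and $b(\mathcal{Y})=\sum_j b(F_j)$, the sums ranging over the fairlets in the union. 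Applying the reformulation to each fairlet gives $a(F_j)\ge t\,b(F_j)$ and $b(F_j)\ge t\,a(F_j)$ for every $j$. Summing the first family yields $a(\mathcal{Y})\ge t\,b(\mathcal{Y})$, and summing the second yields $b(\mathcal{Y})\ge t\,a(\mathcal{Y})$; reading the reformulation in the reverse direction, these two inequalities are exactly $balance(\mathcal{Y})\ge t$, which is the claim.

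The only place needing care, and the nearest thing to an obstacle here, is establishing the reformulation together with the positivity remark guaranteeing that each fairlet contains both groups, so that no division by zero occurs. Once that linearization is in hand the conclusion is immediate; I would also note that the size bound (property (ii) of the decomposition) plays no role whatsoever in the balance guarantee, so the result holds for the union of any sub-collection of fairlets regardless of the final cluster cardinalities.
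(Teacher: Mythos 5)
Your proof is correct, and it takes a genuinely different route from the paper's. The paper argues by induction on the number of fairlets: it writes $balance(F_j)=f_j/m_j$, proves the two-fairlet case by a mediant-style inequality (clearing denominators and summing to get $(f_1+f_2)/(m_1+m_2)\geq t$), and then repeats the same algebra in the inductive step. You instead linearize the definition once --- $balance(S)\geq t$ iff $a(S)\geq t\,b(S)$ and $b(S)\geq t\,a(S)$ --- and sum these additive inequalities over all fairlets simultaneously, using disjointness of the decomposition so that group counts add. This buys two things. First, it removes the induction entirely; the whole argument is a single summation. Second, and more substantively, it is cleaner on a point the paper's write-up glosses over: the paper asserts $balance(F_1\cup F_2)=(f_1+f_2)/(m_1+m_2)$, which tacitly assumes the same protected group is the minority in every fairlet and in the union. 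That equality can fail (e.g., $F_1$ with one female and two males, $F_2$ with two females and one male: the right-hand side is $1/2$ but the union is perfectly balanced); the paper's conclusion survives only because that quantity is still a valid \emph{lower bound} on the union's balance, a step left implicit. Your two-sided formulation is symmetric in the two groups, so the identity of the minority group is irrelevant and the $\min$ in Eq.~\ref{eq:cluster_balance} is handled exactly. Your side remarks --- that $balance(F_j)\geq t>0$ forces both groups to be nonempty so no denominator vanishes, and that the fairlet size bound (property (ii) of Definition~\ref{def:fairlet}) plays no role in the fairness guarantee --- are also accurate.
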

\begin{proof}
 We use the method of induction to derive the proof. Assume we have a set of fairlets $\mathcal{F} = \{F_1, F_2, \ldots, F_l\}$, ~\text{in which,}~ $balance(F_j)\geq t$, $~j = {1,\ldots,l}$.  We first consider the case for any two fairlets $\{F_1,F_2\}\in\mathcal{F}$. We have \(\displaystyle balance(F_1) = \frac{f_1}{m_1} \geq t\)  and \(\displaystyle balance(F_2) = \frac{f_2}{m_2} \geq t\). We denote by $\mathcal{Y}$ is the union of two fairlets $F_1$ and $F_2$, then
 %, in the resulting cluster $C$,
 \begin{equation}\label{eq: bal_f1f2}
     balance(\mathcal{Y})=balance(F_1 \cup F_2)= \frac{f_1+f_2}{m_1+m_2}
 \end{equation}
It holds:
\begin{equation}\label{eq bal_0}
    \begin{aligned}
         \frac{f_1}{m_1}\geq t \phantom{1323331454646431}\\
         \text{or,}~ \frac{f_1}{m_1+m_2}\geq \frac{tm_1}{m_1 +m_2}\phantom{132333131}\\
         \text{Similarly,}~\frac{f_2}{m_1+m_2}\geq \frac{tm_2}{m_1 +m_2}\phantom{132333131}\\
         \implies \frac{f_1}{m_1+m_2} + \frac{f_2}{m_1+m_2} \geq \frac{tm_1}{m_1 +m_2} + \frac{tm_2}{m_1 +m_2}\\
        \implies \frac{f_1+f_2}{m_1+m_2}\geq \frac{t(m_1+m_2)}{m_1+m_2}=t \phantom{13233313156656}
    \end{aligned}
\end{equation}
 Therefore, from Eq.~\ref{eq: bal_f1f2} and Eq.~\ref{eq bal_0} we get,
 \begin{equation}\label{eq: bal_1}
     balance(\mathcal{Y})\geq t
 \end{equation}
Thus, the statement given in Theorem~\ref{theo: bal} is true for any cluster formed by the union of any two fairlets. 
Now we assume that the statement holds true for a cluster formed from $i$ fairlets, i.e, $\mathcal{Y}=\cup_{j\le i}F_j$, where $1<i<l$. Then,
\begin{equation}
    \begin{aligned}
         balance(\mathcal{Y})= \frac{\sum_{j\le i}f_j}{\sum_{j\le i}m_j}\geq t
    \end{aligned}
\end{equation}

Consider another fairlet $F_{i+1}\in \mathcal{F}$ which is not in the formed cluster $\mathcal{Y}$, \(\displaystyle balance(F_{i+1})=\frac{f_{i+1}}{m_{i+1}}\geq t\). Then, by joining $F_{i+1}$ with the cluster $\mathcal{Y}$ we get the new cluster $\mathcal{Y}^{'}$ such that,
\begin{equation}
    \begin{aligned}
         balance(\mathcal{Y}^{'})= \frac{f_{i+1}+ \sum_{j\le i}f_j}{m_{i+1}+\sum_{j\le i}m_j}
    \end{aligned}
\end{equation}
Following the steps in Eq.~\ref{eq bal_0}, we can similarly show that,
\begin{equation}
    \begin{aligned}
         \frac{f_{i+1}+ \sum_{j\le i}f_j}{m_{i+1}+\sum_{j\le i}m_j} \geq t\\
         \implies balance(\mathcal{Y}^{'})\geq t
    \end{aligned}
\end{equation}
Hence, the theorem holds true for cluster formed with $i+1$ fairlets if it is true for $i$ fairlets. Since $i$ is any arbitrary number of fairlets, thus the theorem holds true for all cases.
\end{proof}
%\todoin{Do we need a proof for this? Please Tai and Arjun discuss together}, 
Theorem~\ref{theo: bal} shows that for any cluster formed by union of fairlets, the fairness constraint is always preserved. Henceforth, we don't need further interventions w.r.t. fairness.

The pseudocode is shown in Algorithm~\ref{alg:hierarchical}. In each step, the closest pair of clusters is identified (line 4) and a new cluster is created (line 6) only if its capacity does not exceed the capacity threshold $q$. Otherwise, the next closest pair is investigated. The procedure continues until $k$ clusters remain. The remaining clusters are fair and capacitated according to the correponding thresholds $t$ and $q$.
%\todoin{Please correct: the capacity param should come from the problem definition (t,q,k). likewise what is weight cluster? }

%, \textit{capacity} is the maximum cardinality of the final clusters on the original dataset. 
To compute the proximity matrix (line \ref{alg:compute_proximity_1} and line \ref{alg:compute_proximity_2}), we use the distance between centroids of the corresponding clusters. 
The function $capacity(cluster)$ in line \ref{alg:merge_2} returns the size of a cluster.
%Merging clusters should satisfy the strictest constraints, as indicated in lines \ref{alg:merge_1}, \ref{alg:merge_2} of the algorithm. 
%\todo{give proper citation. Hierarchical capacited clustering}
\begin{algorithm}[htb]
\SetAlgoLined
\KwIn{$\mathcal{F} = \{F_1, F_2, \ldots, F_l\}$: a set of fairlets \newline
$q$: a given maximum capacity of final clusters\newline
$W = \{w_1, w_2, \ldots, w_l\}$: weights of fairlets \newline
$k$: number of clusters
}
\KwOut{A fair-capacitated clustering}
compute the proximity matrix \; \label{alg:compute_proximity_1}
$clusters \gets \mathcal{F} $ //each fairlet $F_j$ is considered as cluster \;
\SetKwRepeat{REPEAT}{repeat}{until}
\REPEAT{$k$ clusters remain}{
    $cluster_1, cluster_2 \gets$ 
    the closest pair of clusters \; \label{alg:merge_1}
    \eIf{$capacity(cluster_1) + capacity(cluster_2) \leq q$}{ \label{alg:merge_2}
        $newcluster \gets$ merge($cluster_1,cluster_2$)\;
        update $clusters$ with $newcluster$\;
        update the proximity matrix \; \label{alg:compute_proximity_2}
        }{
        continue\;
        }
}
\Return{$clusters$}\;
 \caption{Hierarchical fair-capacitated algorithm}
 \label{alg:hierarchical}
\end{algorithm}

\subsection{Fair-capacitated partitioning-based clustering}
\label{subsec:kmedoid_knapsack}

%\todo{Needs re-ordering. First define why K-medoid, then why we need knapsack formulation? whats wrong with simple k-medoid? Include these explanations and along with the order.}
Partitioning-based clustering algorithms, such as $k$-Medoids, can be viewed as a distance minimization problem, in which, we try to minimize the objective function in Eq. \ref{eq:eval}, i.e., minimize the sum of the distance from every $x_j \in X$ to its medoid $s_i$. The vanilla $k$-Medoids does not satisfy a cardinality constraint since the allocating points to clusters step is only based on the distance among them. Now, if we change the goal of this assignment step to find the ``best" data points with a defined capacity for each medoid instead of searching for the most suitable medoid for each point, we can control the cardinality of clusters. We formulate the problem of \emph{assigning points to clusters} subject to a capacity threshold $q$ as a knapsack problem \cite{mathews1896partition}. 
%Here, we can define the problem of finding points with a capacity threshold as a knapsack problem.

At a given $k$-Medoids assignment step, let $S=\{s_1, s_2,...,s_k\}$ be the cluster centers, i.e., medoids, $\mathcal{C}=\{C_1, C_2,...,C_k\}$ be the resulting clusters. We change the assignments of points to clusters, using knapsack, in order to meet the capacity constraint $q$. In particular, we define a flag variable $y_j = 1$ if $x_j$ is assigned to cluster $C_i$, otherwise $y_j = 0$. Now, if we assign a value $v_j$ to data point $x_j$, which depends on the distance of $x_j$ to $C_i$, with $v_j$ being maximum if $C_i$ is the best cluster for $x_j$, i.e, the distance between $x_j$ and $s_i$ is minimum. We define the value $v_j$ of instance $x_j$ based on an exponential decay distance function:
\begin{equation}
    \label{eq:decay_value}
   v_j = e^{-\frac{1}{\lambda}*d(x_j,s_i)}
\end{equation}
where $d(x_j,s_i)$ is the Euclidean distance between the point $x_j$ and the medoid $s_i$. The higher $\lambda$ is the lower the effect of distance in the value of the points. The point which is closer to the medoid will have a higher value. 

Then, the objective function for the assignment step becomes:
%\todo{Why we assign a value $v_i$?}
\begin{equation}
\label{eq:value_obj_func}
 \begin{aligned}
      \textrm{maximize}  \sum_{j=1}^{n} v_{j} y_{j}
      %\textrm{where}~v(x_j) = e^{-\frac{1}{\lambda}*d(x_j,s_i)}
 \end{aligned}
\end{equation}

% \begin{definition} 
% (Value decay scoring)
% Let $s_i$ be a medoid and $x_j \in X$ is a data point, the value of $x_j$ w.r.t $s_i$ is a function of the distance:
% \begin{equation}
%     \label{eq:value_scoring}
%     v(x_j) = e^{-\frac{1}{\lambda}*d(x_j,s_i)}
% \end{equation}
% \end{definition} 
%where $d(x_j,s_i)$ is the Euclidean distance between the point $x_j$ and the medoid $s_i$. $\lambda > 0$ is the decay factor that determines how fast the value drops depending on the distance. The higher $\lambda$ is the lower effect of distance in value of the points. The point which is closer to the medoid will have a higher value.

%\todoin{need to include some statements here making it clear that we use Di as xi and thus the calculation of vj also appliess}

Now, given $\mathcal{F} = \{F_1, F_2, \ldots, F_l\}$ and $W = \{w_1, w_2, \ldots, w_l\}$ are the set of fairlets and their corresponding weights respectively; $q$ is the maximum capacity of the final clusters. Our target is to cluster the set of fairlets $\mathcal{F}$ into $k$ clusters centered by $k$ medoids. We apply the formulas in Eq. \ref{eq:decay_value} and Eq.\ref{eq:value_obj_func} on the set of fairlets $\mathcal{F}$, i.e, each fairlet $F_j$ has the same role as $x_j$. Then, the problem of assigning the fairlets to each $medoid$ in the cluster assignment step becomes finding a set of fairlets with the total weights is less than or equal to $q$ and the total value is maximized. In other words, we can formulate the cluster assignment step in the partitioning-based clustering as a \textbf{0-1 knapsack problem}. 
%\todoin{Arjun: I think here is some inconsistency. Comparing the two eqs, it appears that instead of $D_j$, $y_j$ has the same role as $x_j$ or in eq. \ref{eq:value_obj_func} it should be $\sum_{j=1}^{n} v_{j} s_{j}$. Please address this otherwise it may create unnecessary confusion.}
\begin{equation}
\textrm{maximize}  \sum_{j=1}^{l} v_{j} y_{j}
\end{equation}
\begin{equation}
\textrm{subject to} \quad  \sum_{j=1}^{l} w_{j} y_{j} \leq q  \quad \textrm{and} \quad  y_{j} \in\{0,1\}
\end{equation}
In which, $y_j$ is the flag variable for $F_j$, $y_j = 1$ if $F_j$ is assigned to a cluster, otherwise $y_j = 0$ ; $v_j$ is the value of $F_j$ which is computed by the Eq. \ref{eq:decay_value}; $q$ is the desired maximum capacity.

The pseudocode of our $k$-Medoids fair-capacitated algorithm is described in Algorithm 2. %\ref{alg:kmedoids_knapsack}. 
In which, for each \textit{medoid} we would search for the adequate points (line \ref{alg:assign_medoid}) by using function $knapsack(p, values, w, q)$ (line \ref{alg:knapsack_function})  implemented using dynamic programming, which returns a list of items with a maximum total value and the total weight not exceeding $q$. In the main function, line \ref{alg:k-main_function}, we optimize the clustering cost by replacing $medoids$ with \textit{non-medoid} instances when the clustering cost is decreased. This optimization procedure will stop when there is no improvement in the clustering cost is found (lines \ref{alg:repeat} to \ref{alg:until}).

\setlength{\intextsep}{0pt} 
\begin{algorithm}[htb]
\label{alg:kmedoids_knapsack}
\SetAlgoLined
\KwIn{$\mathcal{F} = \{F_1, F_2, \ldots, F_l\}$: a set of fairlets \newline
$W = \{w_1, w_2, \ldots, w_l\}$: weights of fairlets \newline
$q$: a given maximum capacity of final clusters\newline
$k$: number of clusters
}
\KwOut{A fair-capacitated clustering}

%Function cluster assignment 
\SetKwFunction{FAssignment}{ClusterAssignment}
\SetKwProg{Fn}{Function}{:}{}
  \Fn{\FAssignment{$medoids$}}{
        $clusters \gets \emptyset$\;
        \For{each medoid $s$ in medoids}{ \label{alg:assign_medoid}
        candidates $\gets$ all fairlets which are not assigned to any cluster \;
        $p$ $\gets$ length(candidates) \;
        $w$ $\gets$ weights(candidates) \;
        \For {each $fairlet_i$ in candidates}{
        $values[i]$ $\gets$ $v(fairlet_i)$ //Eq. \ref{eq:decay_value} \;
        } 
        $clusters[s]$$ \gets $knapsack($p$, $values$, $w$, $q$) \; \label{alg:knapsack_function}
    }
    \KwRet $clusters$\;
  }
%Main function
\SetKwFunction{FMain}{main} 
\label{alg:k-main_function}
\SetKwProg{Fm}{Function}{:}{}
  \Fm{\FMain{}}{

$medoids \gets$ select $k$ of the $l$ fairlets arbitrarily \;
ClusterAssignment(medoids) \;
$cost_{best}  \gets $ current clustering cost\;
$s_{best}  \gets null$ \;
$o_{best}  \gets null$ \;

\SetKwRepeat{REPEAT}{repeat}{until}
\REPEAT{no improvements can be achieved by any replacement}{
\label{alg:repeat}
\For{each medoid $s$ in medoids}
    {
        \For {each non-medoid $o$ in $\mathcal{F}$}
        {
        consider the swap of $s$ and $o$, compute the current clustering cost\;
        \If{current clustering cost < $cost_{best}$} 
        {
        $s_{best} \gets s$\;
        $o_{best} \gets o$\;
        $cost_{best} \gets current~clustering~cost $\;
        }
        }
    }
update $medoids$ by the swap of $s_{best}$ and $o_{best}$ \;
ClusterAssignment(medoids)
}
}
\label{alg:until}
\Return{$clusters$}\;
\caption{$k$-Medoids fair-capacitated algorithm}
\end{algorithm}

%\todo{still justification is missing why we have two approaches. Hierarchical CCP since alrady presented by \cite{lam2013capacitated}, if we can show why we bring the knapsack based solution, would be nice.}

\section{Experiments}
\label{sec:experiment}
In this section, we describe our experiments and the performance of our proposed algorithms on four real educational datasets.  

\subsection{Experimental setup}
\subsubsection{Datasets} 
We evaluate our proposed methods on four public datasets. An overview of datasets is presented in Table \ref{tbl:dataset}.

\begin{table*}[!htb]
\centering
\caption{An overview of the datasets}\label{tbl:dataset}
\begin{tabular}{ccccc}

\hline
Dataset                              & \multicolumn{1}{l}{\#instances} & \multicolumn{1}{l}{\#attributes} & Protected attribute & Balance score \\ \hline
UCI student performance-Mathematics       & 395                            & 33                             & Sex (F: 208, M: 187 ) & 0.899     \\
UCI student performance-Portuguese & 649                            & 33                             & Sex (F: 383; M: 266) & 0.695      \\
PISA test scores                     & 3,404                           & 24                             & Male (1: 1,697; 0: 1,707 )  & 0.994\\
OULAD                                & 4,000                           & 12                             & Gender (F: 2,000; M: 2,000) & 1\\
MOOC                                 & 4,000                           & 21                             & Gender (F: 2,000; M: 2,000) & 1\\ \hline
\end{tabular}
\end{table*}

\textbf{UCI student performance.}
%\footnote{https://archive.ics.uci.edu/ml/datasets/Student+Performance}} 
This dataset includes the demographics, grades, social and school-related features of students in secondary education of two Portuguese schools \cite{cortez2008using} in 2005 - 2006 with two distinct subjects: Mathematics and Portuguese. We encode all categorical attributes
%such as ``mother's education", ``father's job" 
by using a one-hot encoding technique. ``Sex" is selected as the protected attribute, i.e., we aim to balance gender in the resulting clusters.

\textbf{PISA test scores.} 
The dataset contains information about the demographics and schools of American students \cite{fleischman2010highlights} taking the exam in 2009 from the Program for International Student Assessment (PISA) distributed by the United States National Center for Education Statistics (NCES). ``Male" which contains two values $\{1,0\}$ is chosen as the protected attribute.

\textbf{Open University Learning Analytics (OULAD).} %\footnote{https://analyse.kmi.open.ac.uk/open\_dataset}}
This is the dataset from the OU Analyse project \cite{kuzilek2017open} of Open University, England in 2013 - 2014. Information of students includes their demographics, courses, their interactions with the virtual learning environment, and their final outcome. We aim to balance the ``Gender" attribute in the resulting clusters.
    
\textbf{MOOC.}
%\footnote{https://dataverse.harvard.edu/dataset.xhtml?persistentId=doi:10.7910/DVN/26147}}
The data covers students who enrolled in the 16 edX courses offered by the two institutions (Harvard University and the Massachusetts Institute of Technology) during 2012-2013 \cite{DVN/26147_2014}. The dataset contains aggregated records which represent students' activities and their final grades of the courses. ``Gender" is the protected attribute for fairness in our experiments.

\subsubsection{Baselines} We compare our approaches against well-known clustering methods, including fairness-aware algorithms and a traditional one.

\textbf{$k$-Medoids.} $k$-Medoids clustering \cite{kaufman1990partitioning}
is a traditional partitioning technique of clustering that divides the dataset into $k$ clusters and minimizes the clustering cost. $k$-Medoids uses the actual instances as centers.

\textbf{Vanilla fairlet.} This is the approach proposed by Chierichetti et al. \cite{chierichetti2017fair}. The first phase computes a vanilla fairlet decomposition that ensures fair clusters, but it might not give the optimal cost value. A vanilla $k$-Center clustering algorithm \cite{gonzalez1985clustering} is applied to cluster those fairlets into $k$ clusters in the second step.

\textbf{MCF fairlet.} In this version \cite{chierichetti2017fair}, the fairlet decomposition is transformed into a \textit{minimum cost flow} (MCF) problem, by which an optimized version of fairlet decomposition in terms of cost value is computed. Like the vanilla version, a $k$-Center method is used to cluster fairlets in the second phase.

In our experiments, both resulting fairlets generated by vanilla fairlet and MCF fairlet methods are used for our proposed fair-capacitated clustering algorithms. Therefore, we have two versions each proposed methods, namely \textbf{Vanilla fairlet hierachical fair-capaciated} and \textbf{MCF fairlet hierachical fair-capacitated} (for the hierarchical approach), \textbf{Vanilla fairlet k-Medoids fair-capacitated} and \textbf{MCF fairlet k-Medoids fair-capacitated} (for the partitioning approach). Section \ref{subsec:experimental_results} presents the experimental results of these methods.
\subsubsection{Measurement} We report our experimental results on clustering cost, balance score, and capacity. 
%\todo{capacity is a setting we use in our method. To my understanding these should be minimum cardinality of any cluster}. 
The \textit{clustering cost} is used for evaluating the quality of clustering, which is measured by the formula given in Eq. \ref{eq:eval}. The fairness of clustering is measured by the \textit{balance} score in Eq. \ref{eq:clustering_balance}.
%, in which, the fairness of each cluster is evaluated by Eq. \ref{eq:cluster_balance}. 
%\todo{these should be in Parameter selection}

\subsubsection{Parameter selection}
\label{subsubsec:Parameter}
Regarding fairness, a minimum threshold of balance $t$ is set to 0.5 for all datasets in our experiments. It means that the proportion of the minority group (e.g.,: female) is at least 50\% in the resulting clusters.
%The $\lambda$ factor in Eq. \ref{eq:value_scoring} is determined by grid-search for our $k$-Medoids fair-capacitated method. 
Regarding the $\lambda$ factor in Eq. \ref{eq:decay_value}, a value $\lambda = 0.3$ is chosen for our experiments from a range of [0.1, 1.0] via grid-search. We evaluate the clustering cost and balance score on a small dataset, i.e., UCI student performance dataset - Mathematics subject w.r.t $\lambda$. 

Theoretically, the \textbf{ideal capacity} of clusters is \(\displaystyle  \Big \lceil \frac{|X|}{k}  \Big \rceil\) where $|X|$ is the population of dataset $X$, $k$ is the number of desired clusters. However, in many cases, the clustering models cannot satisfy this constraint, especially the hierarchical clustering model. Hence, we set the formula in Eq. \ref{eq:capacity} to compute the \textbf{maximum capacity} $q$ of clusters; $\varepsilon$ is a parameter chosen in experiments for each fair-capacitated clustering approach.
\begin{equation}
\label{eq:capacity}
    q = \Big \lceil \frac{|X|*\varepsilon}{k} \Big \rceil 
\end{equation}
In our experiments, to find the appropriate value of $\varepsilon$, we set a range of [1.0, 1.3] to ensure all the generated clusters have members. We evaluate the cardinality of resulting clusters on the UCI student performance dataset - Mathematics subject. Based on this, $\varepsilon$ is set to 1.01 and 1.2, for $k$-Medoids fair-capacitated and hierarchical fair-capacitated methods, respectively.

\subsection{Experimental results}
\label{subsec:experimental_results}
%\todo{In general our claims are not supported by numbers in the text. Need to mention the concrete difference from the other methods in terms of numbers. Also we need to provide a comparison between our two approaches, how the knapsack formulation helps over the hierarchical.}
%We evaluate the performance of the different methods w.r.t. the quality, fairness and capacity of the resulting clusters over different datasets.

\subsubsection{UCI student performance - Mathematics}
%The quality of the clustering, as evaluated by clustering cost (c.f., Eq.~\ref{eq:eval}) and fairness, as evaluated by balance (c.f., Eq.~\ref{eq:clustering_balance}), is shown in Figure \ref{fig:student_mat}.
In Figure 
\ref{fig:student_mat}-a, the clustering cost of all methods is worse compared to that of the vanilla $k$-Medoids clustering. This is expected as these methods have to satisfy constraints on fairness or/and cardinality. However, both of our approaches outperform the vanilla fairlet and MCF fairlet methods. In which, \textit{MCF fairlet hierarchical fair-capacitated} shows the best performance due to the optimization in the merging step.
Regarding fairness, as shown in Figure \ref{fig:student_mat}-b, the minimum threshold of balance $t$ is visualized as a dashed line while the actual balance from the dataset is plotted as a dotted line. All of our methods are comparative to the competitors in almost cases. Interestingly, our \textit{vanilla fairlet $k$-Medoids fair-capacitated} method outperforms the competitive methods when $k$ is less than 10. In terms of cardinality, as presented in Figure \ref{fig:student_mat}-c, the maximum capacity thresholds $q$ are indicated by the figure's dashed and dotted lines. Our capacitated variants are superior (lower dispersion as shown by the interquartile ranges). We have to thicken the boxplots of our proposed methods since in many cases the dispersion in the size of the resulting clusters is too small.
%It is worth noting that the instances are relatively evenly distributed into the clusters generated by our methods. 
MCF fairlet shows the worst performance in terms of cardinality, followed by Vanilla fairlet and vanilla $k$-Medoids.  
%In overall, our fair-capacitated methods achieve a good tradeoff between clustering quality, clustering fairness and capacity of derived clusters.
%\todo{more detailed explanation with numbers to support this claim}

\begin{figure*} [!htb]
     \centering
     \begin{subfigure}[b]{1.0\textwidth}
         \centering
         \label{fig:plot_student_mat}
         \includegraphics[width=\textwidth]{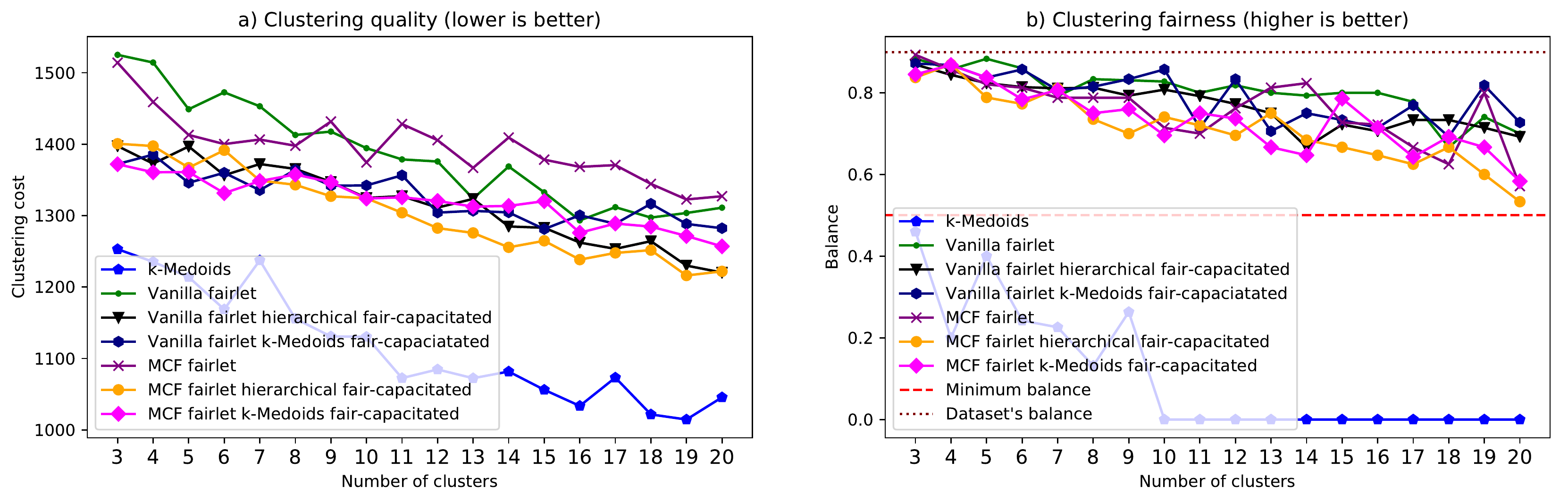}
         
         %\caption{Clustering quality (left) and fairness (right).}
         \vspace{-10pt}
     \end{subfigure}
     \hfill
     \begin{subfigure}[b]{1.0\textwidth}
         \centering
         \includegraphics[width=\textwidth]{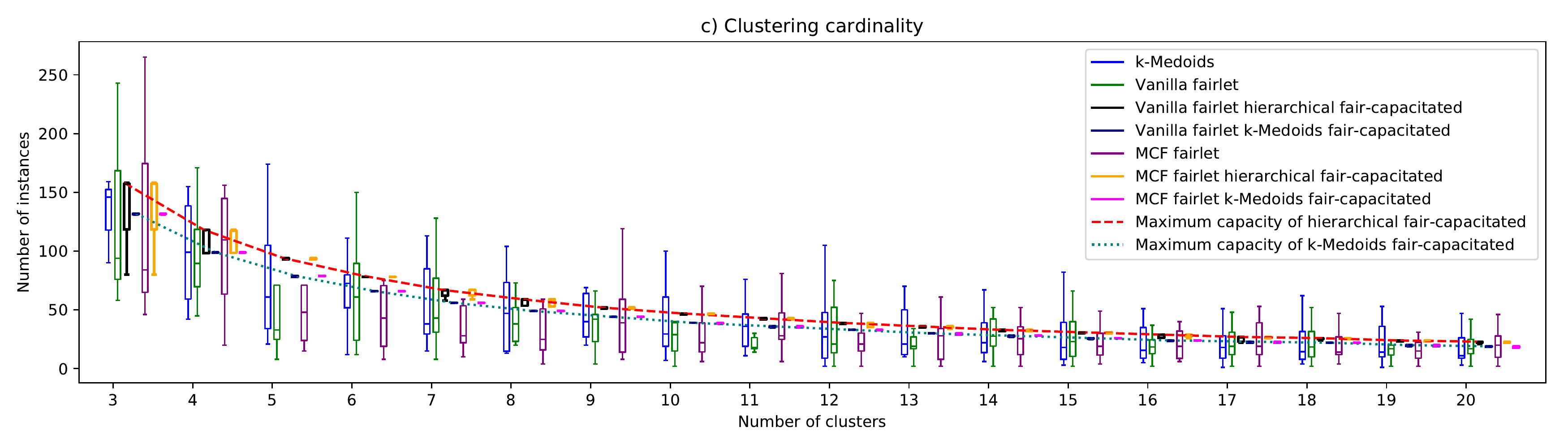}
         %\caption{Cardinality of clusters.}
         \label{fig:boxplot_student_mat}
         
     \end{subfigure}
     \vspace{-25pt}
     \caption{Performance of different methods on UCI student performance dataset - Mathematics subject}
     
     \label{fig:student_mat}
\end{figure*}

\subsubsection{UCI student performance - Portuguese}
When $k$ is less than 4, as shown in Figure \ref{fig:student_por}-a, the clustering quality of our models can be close to that of the vanilla $k$-Medoids method. However, the clustering cost is fluctuated thereafter due to the effort to maintain the fairness and cardinality of methods. Our \textit{vanilla fairlet hierarchical fair-capacitated} outperforms other competitors in most cases. Vanilla fairlet and MCF fairlet show the worst clustering cost as an effect of the $k$-Center method. Figure \ref{fig:student_por}-b depicts the clustering fairness. As we can observe, in terms of fairness, \textit{vanilla fairlet hierarchical fair-capacitated} has the best performance when $k$ is less than 10. Contrary to that, by selecting each point for each cluster in the cluster assignment step, the \textit{$k$-Medoids fair-capacitated} method can maintain well the fairness in many cases. Regarding the cardinality, as illustrated in Figure \ref{fig:student_por}-c, our approaches outperform the competitors when they can keep the number of instances for each cluster under the specified thresholds. 

\begin{figure*} [!htb]
     \centering
     \begin{subfigure}[b]{1.0\textwidth}
         \centering
         
         \includegraphics[width=\textwidth]{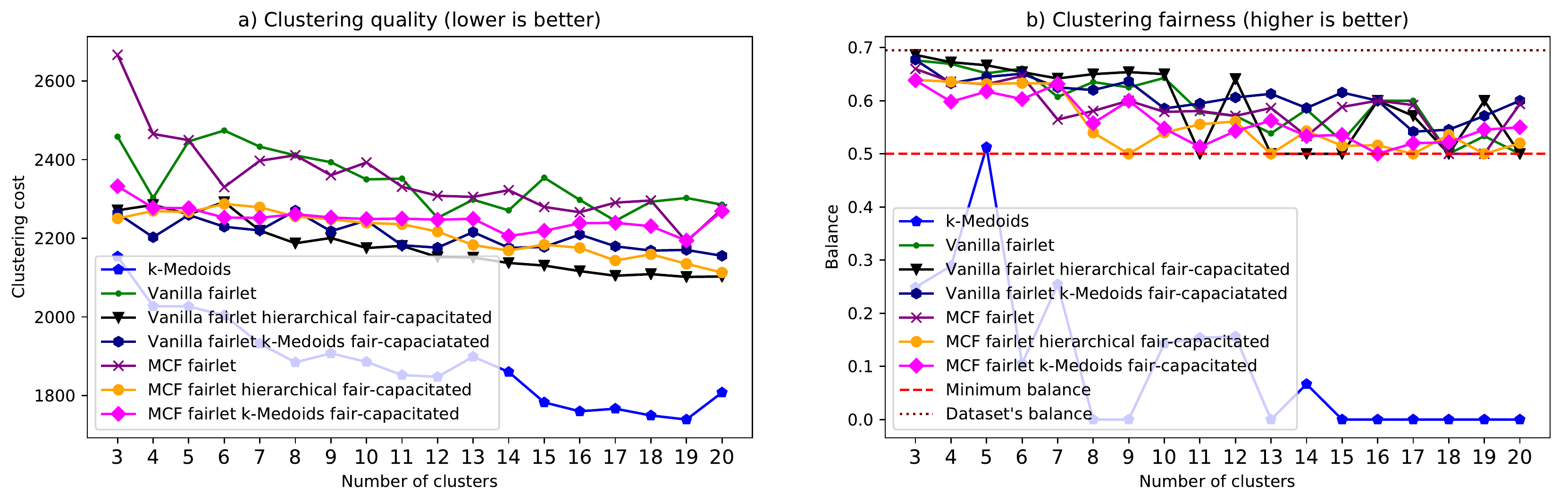}
         %\caption{Clustering quality (left) and fairness (right) performance for different methods and varying number of clusters.}
         \label{fig:plot_student_por}
         \vspace{-10pt}
     \end{subfigure}
     \hfill
     \begin{subfigure}[b]{1.0\textwidth}
         \centering
         \includegraphics[width=\textwidth]{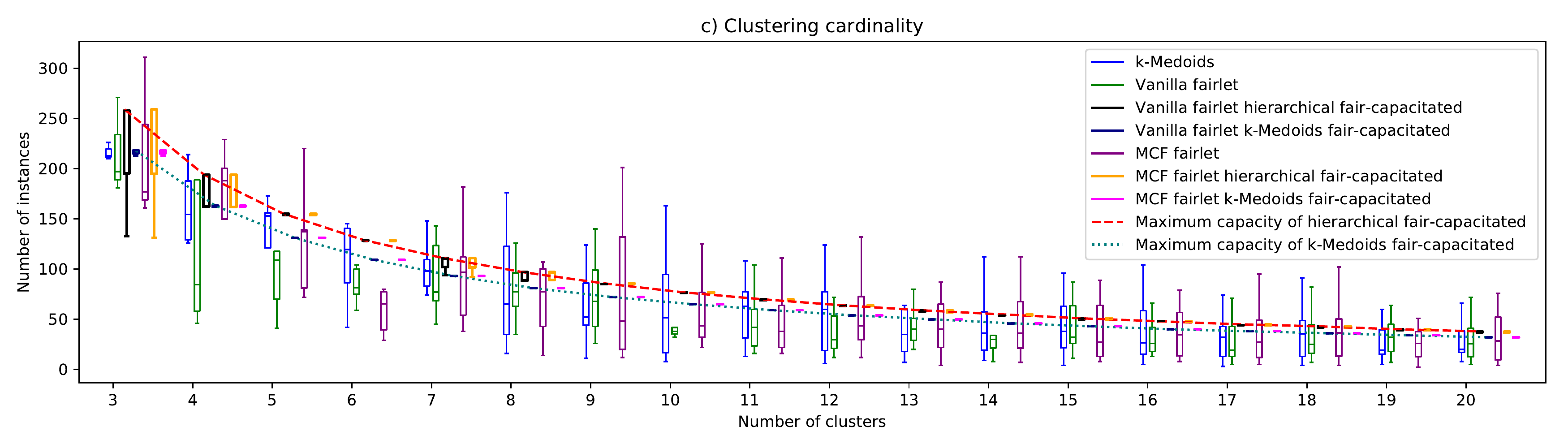}
         %\caption{Cardinality of clusters for different methods and varying number of clusters.}
         \label{fig:boxplot_student_por}
         
     \end{subfigure}
     \vspace{-25pt}
     \caption{Performance of different methods UCI student performance dataset - Potuguese subject}
     
     \label{fig:student_por}
\end{figure*}

\subsubsection{PISA test scores}
Although the clustering cost increases in most methods, as presented in Figure \ref{fig:pisa}-a, our approaches outperform the competitors vanilla fairlet and MCF fairlet. The hierarchical approach shows the best performance compare to other methods which are concerned with equity and capacity. Interestingly, our proposed methods outperform the competitors when they can preserve very well the balance score for all number of clusters in terms of fairness (Figure \ref{fig:pisa}-b). This is explained by fairness in the fairlets that are used as the input for our clustering method. It is easy to observe in Figure \ref{fig:pisa}-c that our proposed methods strictly follow the maximum capacities of clusters regarding the cardinality. MCF fairlet is the worst model, followed by $k$-Medoids and vanilla fairlet and MCF fairlet.

\begin{figure*} [!htb]
     \centering
     \begin{subfigure}[b]{1.0\textwidth}
         \centering
         
         \includegraphics[width=\textwidth]{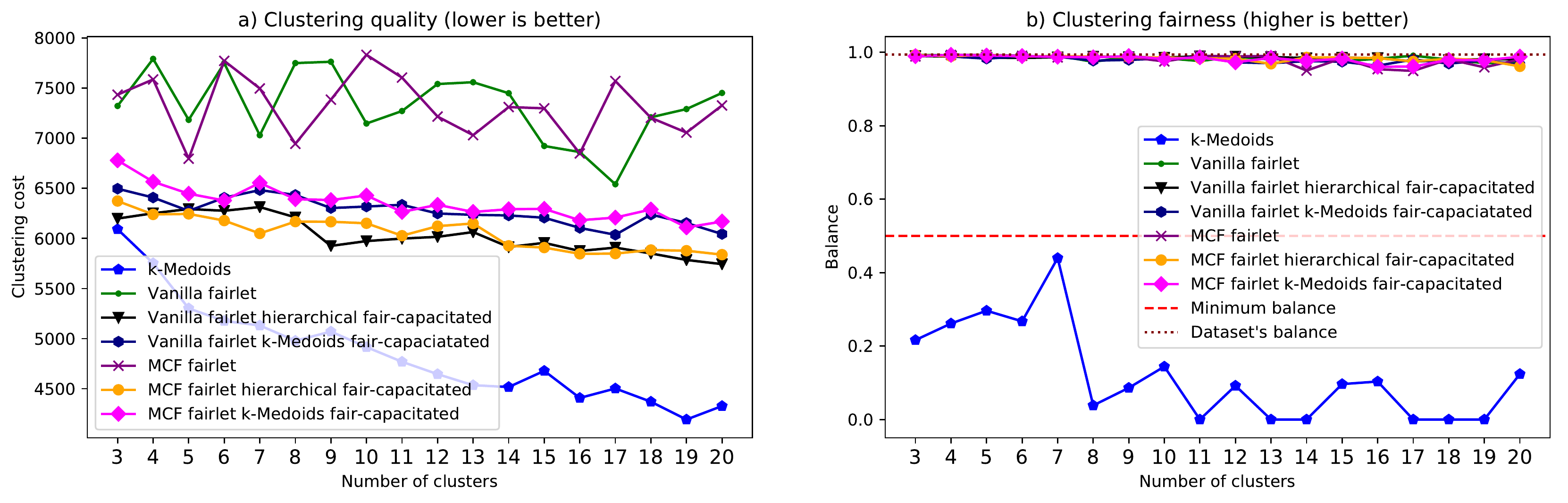}
         %\caption{Clustering quality (left) and fairness (right) performance for different methods and varying number of clusters.}
         \label{fig:plot_pisa}
         \vspace{-10pt}
     \end{subfigure}
     \hfill
     \begin{subfigure}[b]{1.0\textwidth}
         \centering
         \includegraphics[width=\textwidth]{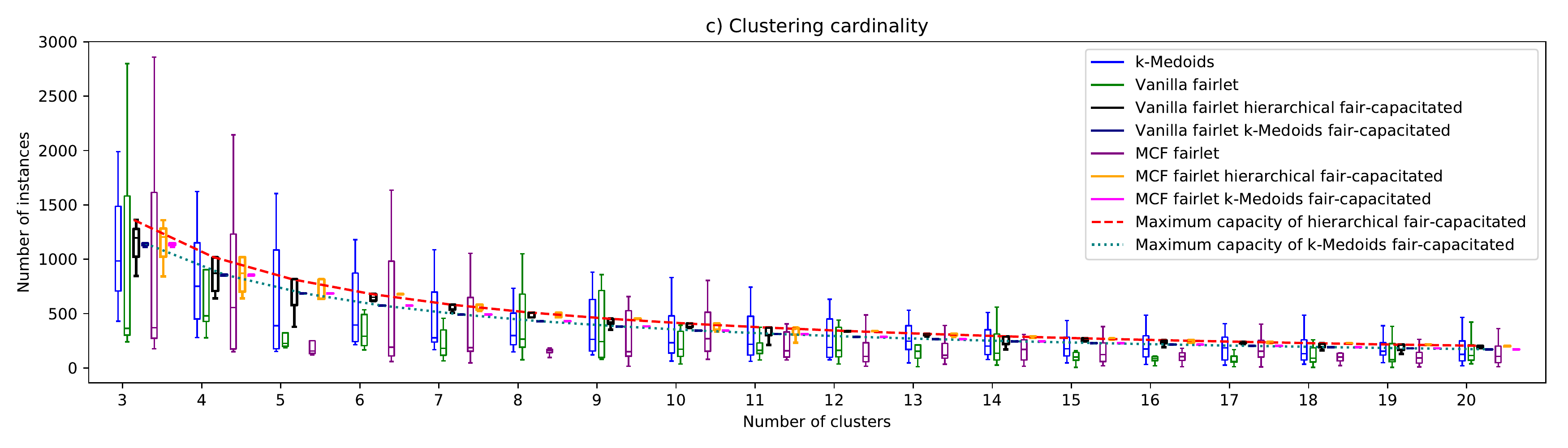}
         %\caption{Cardinality of clusters for different methods and varying number of clusters.}
         \label{fig:boxplot_pisa}
         
     \end{subfigure}
     \vspace{-25pt}
     \caption{Performance of different methods on PISA Test Score dataset}
     
     \label{fig:pisa}
\end{figure*}

\subsubsection{OULAD}
Our \textit{MCF fairlet $k$-Medoids fair-capacitated} approach outperforms other methods in terms of clustering cost, although there is an increase compared to the vanilla $k$-Medoids algorithm, as we can see in Figure \ref{fig:oulad}-a. Concerning fairness, in Figure \ref{fig:oulad}-b, $k$-Medoids is the weakest method while others can achieve the highest balance. The balance of \textit{Gender} feature in the dataset is the main reason for this result. All fairlets are fully fair; this is a prerequisite for our methods of being able to maintain the perfect balance. Regarding cardinality, our approaches demonstrate their strength in ensuring the capacity of clusters (Figure \ref{fig:oulad}-c). The difference in the size of the clusters generated by our methods is tiny. This is in stark contrast to the trend of competitors.

\begin{figure*} [!htb]
     \centering
     \begin{subfigure}[b]{1.0\textwidth}
     
         \centering
         \includegraphics[width=\textwidth]{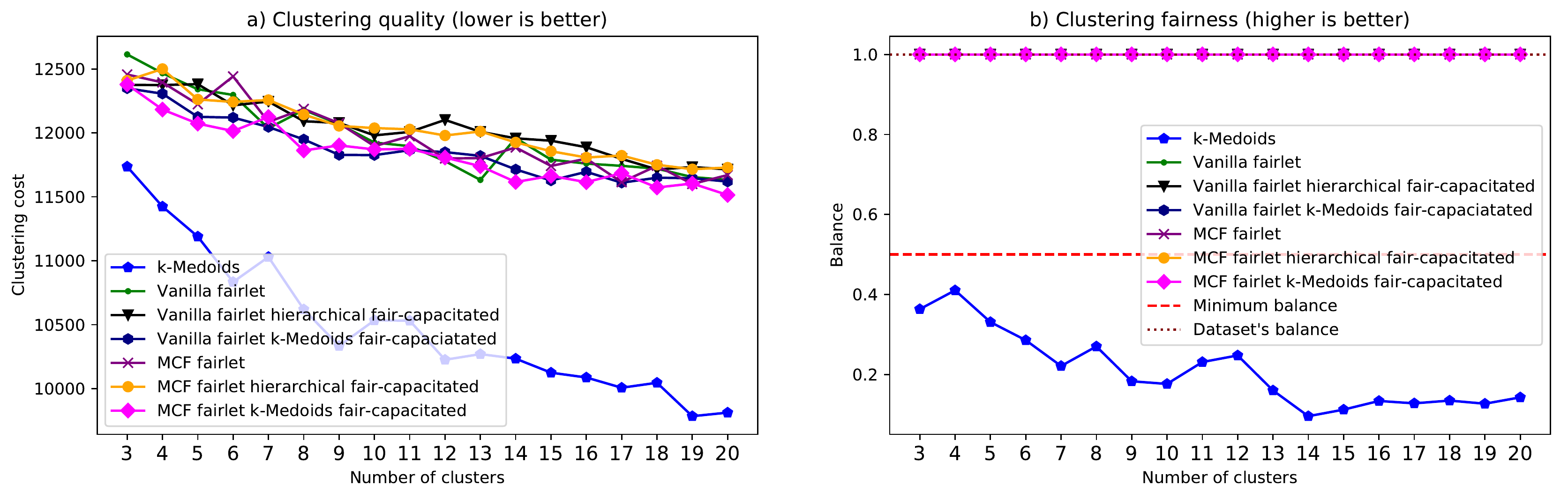}
         %\caption{Clustering quality (left) and fairness (right) performance for different methods and varying number of clusters.}
         \label{fig:plot_oulad}
          \vspace{-10pt}
     \end{subfigure}
     \hfill
     \begin{subfigure}[b]{1.0\textwidth}
         \centering
         \includegraphics[width=\textwidth]{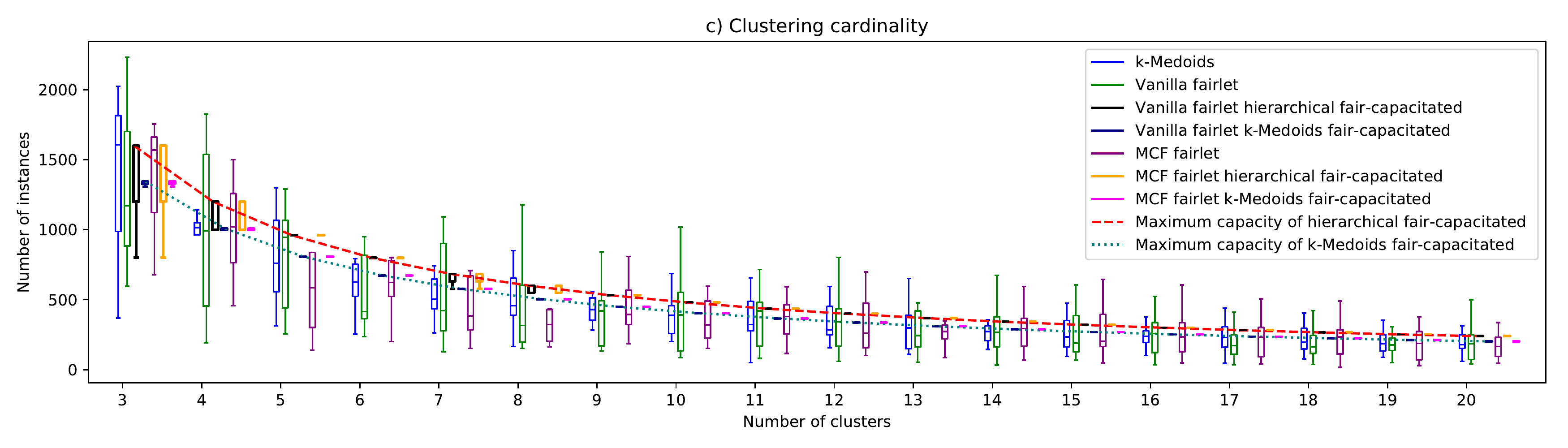}
         %\caption{Cardinality of clusters for different methods and varying number of clusters.}
         \label{fig:boxplot_oulad}
         
     \end{subfigure}
     \vspace{-25pt}
     \caption{Performance of different methods on OULAD dataset}
     
     \label{fig:oulad}
\end{figure*}

\subsubsection{MOOC}
The results of clustering quality are described in Figure \ref{fig:mooc}-a. Although an increase in the clustering cost is the main trend, our methods outperform the vanilla fairlet and MCF fairlets methods. Regarding clustering fairness, as depicted in Figure \ref{fig:mooc}-b, our approaches can maintain the perfect balance for all experiments. This is the result of the actual balance in the dataset and the fairlets. The emphasis is our methods can divide all the experimented instances into capacitated clusters, as presented in Figure \ref{fig:mooc}-c, which proves their superiority in presenting the results over the competitors regarding the cardinality of clusters.
\begin{figure*} [!htb]
     \centering
     \begin{subfigure}[b]{1.0\textwidth}
         \centering
         
         \includegraphics[width=\textwidth]{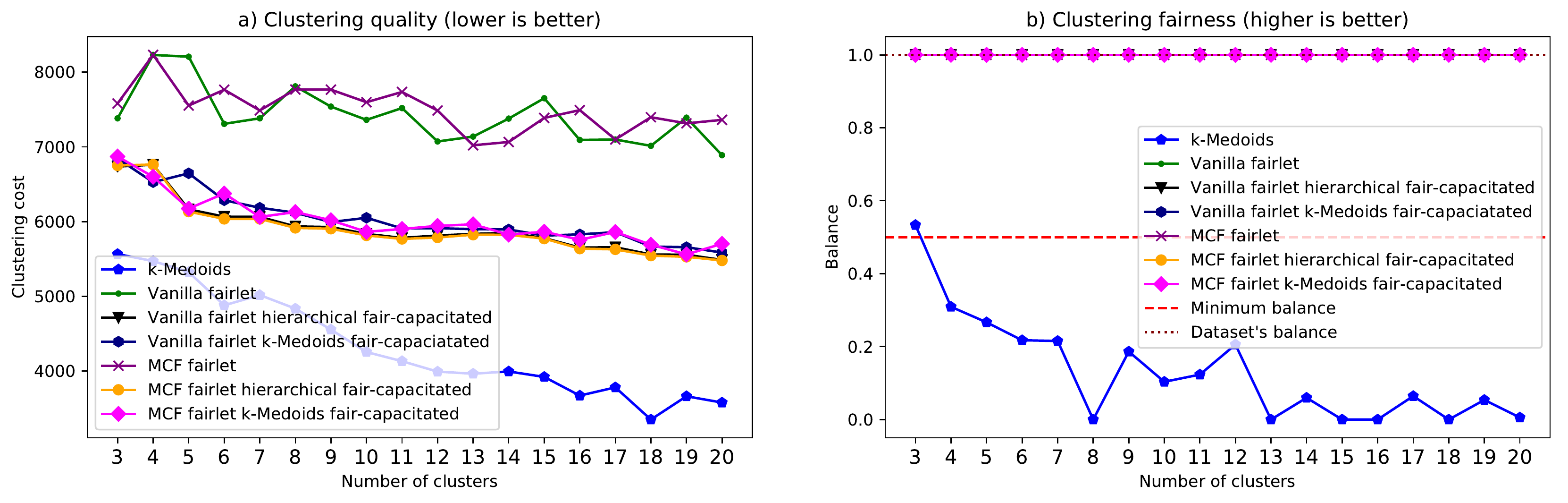}
         %\caption{Clustering quality (left) and fairness (right) performance for different methods and varying number of clusters.}
         \label{fig:plot_mooc}
         \vspace{-10pt}
     \end{subfigure}
     \hfill
     \begin{subfigure}[b]{1.0\textwidth}
         \centering
         \includegraphics[width=\textwidth]{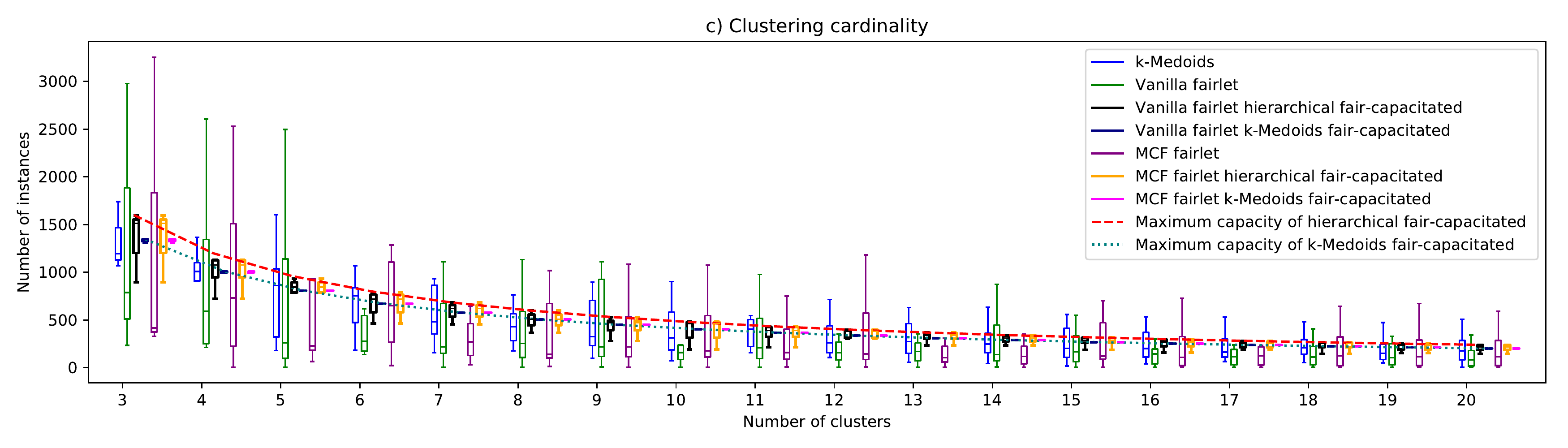}
         %\caption{Cardinality of clusters for different methods and varying number of clusters.}
         \label{fig:boxplot_mooc}
     \end{subfigure}
     \vspace{-25pt}
     \caption{Performance of different methods on MOOC dataset}
     \label{fig:mooc}
\end{figure*}

\subsubsection{Summary of the results}

In general, fairness is well maintained in all of our experiments. When the data is fair, in case of OULAD and MOOC datasets, our methods achieve perfect fairness. In terms of cardinality, our methods are able to maintain the cardinality of resulting clusters within the maximum capacity threshold, which is significantly superior to competitive methods.
%our methods outperform other competing methods because they are able to maintain the number of elements in clusters within the threshold. 
The fair-capacitated partitioning based method is better than the hierarchical approach since it can determine the capacity threshold closest to the \textit{ideal cardinality} mentioned in Section \ref{subsubsec:Parameter}. Regarding the clustering cost, the hierarchical approach has an advantage over other methods by outperforming its competitors in most experiments.
%\todo{may be put Hierarchical method as baseline}

\section{Conclusion and Outlook}
\label{sec:conclusion}

In this work, we introduced the fair-capacitated clustering problem that extends traditional clustering, solely focusing on similarity, by also aiming at a balanced cardinality among the clusters and a fair-representation of instances in each cluster according to some protected attribute like gender or race. Our solutions work on the fairlets derived from the original instances: the hierarchical-based approach takes into account the cardinality requirement during the merging step, whereas the partitioning-based approach takes into account the cardinality of the final clusters during the assignment step which is formulated as a knapsack problem. Our experiments show that our methods are effective in terms of fairness and cardinality while maintaining clustering quality. Apart from the educational field, the fair-capacitated clustering problem can contribute to other applications such as clustering of customers in marketing studies, vehicle routing and communication network design.
An immediate future direction is to improve the clustering quality by optimizing the cluster assignment phase of the partitioning-based approaches. Moreover, we plan to extend our work for multiple protected attributes.

%
% The following two commands are all you need in the
% initial runs of your .tex file to
% produce the bibliography for the citations in your paper.
\bibliographystyle{abbrv}
\bibliography{mybibliography} 
% You must have a proper ".bib" file
%  and remember to run:
% latex bibtex latex latex
% to resolve all references
%
% ACM needs 'a single self-contained file'!
%

% That's all folks!
\end{document}